\newtheorem{theorem}{Theorem}
\newtheorem{lemma}[theorem]{Lemma}
\newcommand{\norm}[1]{\left\lVert#1\right\rVert}
\title{Certifying Confidence via Randomized Smoothing}
\author{%
Aounon Kumar\\
University of Maryland\\
\texttt{aounon@umd.edu}\\
\And
Alexander Levine\\
University of Maryland\\
\texttt{alevine0@cs.umd.edu}\\
\And
Soheil Feizi\\
University of Maryland\\
\texttt{sfeizi@cs.umd.edu}\\
\And
Tom Goldstein\\
University of Maryland\\
\texttt{tomg@cs.umd.edu}\\
}
\begin{document}

\maketitle

\begin{abstract}
Randomized smoothing has been shown to provide good certified-robustness guarantees for high-dimensional classification problems.
It uses the probabilities of predicting the top two most-likely classes around an input point under a smoothing distribution to generate a certified radius for a classifier's prediction.
However, most smoothing methods do not give us any information about the \emph{confidence} with which the underlying classifier (e.g., deep neural network) makes a prediction.
In this work, we propose a method to generate certified radii for the prediction confidence of the smoothed classifier.
We consider two notions for quantifying confidence: average prediction score of a class and the margin by which the average prediction score of one class exceeds that of another.
We modify the Neyman-Pearson lemma (a key theorem in randomized smoothing) to design a procedure for computing the certified radius where the confidence is guaranteed to stay above a certain threshold.
Our experimental results on CIFAR-10 and ImageNet datasets show that using information about the distribution of the confidence scores allows us to achieve a significantly better certified radius than ignoring it.
Thus, we demonstrate that extra information about the base classifier at the input point can help improve certified guarantees for the smoothed classifier.
Code for the experiments is available at \url{https://github.com/aounon/cdf-smoothing}.
\end{abstract}

\section{Introduction}
Deep neural networks have been shown to be vulnerable to adversarial attacks, in which a nearly imperceptible perturbation is added to an input image to completely alter the network's prediction \cite{Szegedy2014, MadryMSTV18, GoodfellowSS14, LaidlawF19}. Several empirical defenses have been proposed over the years to produce classifiers that are robust to such perturbations \cite{KurakinGB17, BuckmanRRG18, GuoRCM18, DhillonALBKKA18, LiL17, GrosseMP0M17, GongWK17}. However, without robustness guarantees, it is often the case that these defenses are broken by stronger attacks \cite{Carlini017, athalye18a, UesatoOKO18, tramer2020adaptive}. Certified defenses, such as those based on convex-relaxation \cite{WongK18, Raghunathan2018, Singla2019, Chiang20, singla2020secondorder} and interval-bound propagation \cite{gowal2018effectiveness, HuangSWDYGDK19, dvijotham2018training, mirman18b}, address this issue by producing robustness guarantees within a neighborhood of an input point. However, due to the complexity of present-day neural networks, these methods have seen limited use in high-dimensional datasets such as ImageNet.

Randomized smoothing has recently emerged as the state-of-the-art technique for certifying adversarial robustness with the scalability to handle datasets as large as ImageNet \cite{LecuyerAG0J19, LiCWC19, cohen19, SalmanLRZZBY19}. This defense uses a base classifier, e.g.\! a deep neural network, to make predictions. Given an input image, a smoothing method queries the top class label at a large number of points in a Gaussian distribution surrounding the image, and returns the label with the majority vote.  If the input image is perturbed slightly, the new voting population overlaps greatly with the smoothing distribution around the original image, and so the vote outcome can change only a small amount.

Conventional smoothing throws away a lot of information about class labels, and has limited capabilities that make its outputs difficult to use for decision making.  Conventional classification networks with a softmax layer output a confidence score that can be interpreted as the degree of certainty the network has about the class label~\cite{guo2017calibration}. This is a crucial piece of information in real world decision-making applications such as self-driving cars~\cite{bojarski2016end} and disease-diagnosis networks~\cite{Jiang2011}, where safety is paramount. 

In contrast, standard Gaussian smoothing methods take binary votes at each randomly sampled point -- i.e., each point votes either for or against the most likely class, without conveying any information about how confident the network is in the class label.  This may lead to scenarios where a point has a large certified radius but the underlying classifier has a low confidence score. For example, imagine a 2-way classifier for which a large portion, say 95\%, of the sampled points predict the same class.  In this case, the certified radius will be very large (indicating that this image is not an $\ell_2$-bounded adversarial example).  However, it could be that each point predicts the top class with very low confidence.  
 In this case, one should have very low confidence in the class label, despite the strength of the adversarial certificate.  A Gaussian smoothing classifier counts a 51\% confidence vote exactly the same way as a 99\% confidence vote, and this important information is erased.

In this work, we restore confidence information in certified classifiers by proposing a method that produces class labels with a {\em certified confidence score}. Instead of taking a vote at each Gaussian sample around the input point, we average the confidence scores from the underlying base classifier for each class. 
The prediction of our smoothed classifier is given by the argmax of the expected scores of all the classes. Using the probability distribution of the confidence scores under the Gaussian, we produce a lower bound on how much the expected confidence score of the predicted class can be manipulated by a bounded perturbation to the input image.
To do this, we adapt the Neyman-Pearson lemma, the fundamental theorem that characterizes the worst-case behaviour of the classifier under regular (binary) voting, to leverage the distributional information about the confidence scores.
The lower bound we obtain is monotonically decreasing with the $\ell_2$-norm of the perturbation and can be expressed as a linear combination of the Gaussian CDF at different points. This allows us to design an efficient binary search based algorithm to compute the radius within which the expected score is guaranteed to be above a given threshold. Our method endows smoothed classifiers with the new and important capability of producing confidence scores.

We study two notions of measuring confidence: the {\em average prediction score} of a class, and the {\em margin}  by which the average prediction score of one class exceeds that of another. The average prediction score is the expected value of the activations in the final softmax-layer of a neural network under the smoothing distribution.
A class is guaranteed to be the predicted class if its average prediction score is greater than one half (since softmax values add up to one) or it maintains a positive margin over all the other classes. For both these measures, along with the bounds described in the previous paragraph, we also derive naive lower bounds on the expected score at a perturbed input point that do not use the distribution of the scores.
We perform experiments on CIFAR-10 and ImageNet datasets which show that using information about the distribution of the scores allows us to achieve better certified guarantees than the naive method. 

{\bf Related work:} Randomized smoothing as a technique to design certifiably 
robust machine learning models has been studied amply in recent years. It has been used to produce certified robustness against additive threat models, such as, $\ell_1$~\cite{LecuyerAG0J19, teng2020ell}, $\ell_2$~\cite{cohen19, li2019secondorder, levine2020tight} and $\ell_0$-norm~\cite{LeeYCJ19, LevineF20aaai} bounded adversaries, as well as non-additive threat models, such as, Wasserstein Adversarial attacks~\cite{levine2019wasserstein}. A derandomized version has been shown to provide robustness guarantees for patch attacks~\cite{Levine2020patch}. Smoothed classifiers that use the average confidence scores have been studied in~\cite{SalmanLRZZBY19} to achieve better certified robustness through adversarial training. A recent work uses the median score to generated certified robustness for regression models~\cite{chiang2020detection}. Various limitations of randomized smoothing, like its inapplicability to high-dimensional problems for $\ell_\infty$-robustness, have been studied in~\cite{kumar2020curse, yang2020randomized, blum2020random}. 

\section{Background and Notation}
Gaussian smoothing, introduced by \citeauthor{cohen19} in~\citeyear{cohen19}, relies on a ``base classifier,'' which is a mapping $f: \mathbb{R}^d \rightarrow \mathcal{Y}$ where $\mathbb{R}^d$ is the input space and $\mathcal{Y}$ is a set of $k$ classes. It defines a smoothed classifier $\bar{f}$ as
\[\bar{f}(x) = \underset{c \in \mathcal{Y}}{\text{argmax}} \; \mathbb{P}(f(x + \delta) = c)\]
where $\delta \sim \mathcal{N}(0, \sigma^2 I)$ is sampled from an isometric Gaussian distribution with variance $\sigma^2$. It returns the class that is most likely to be sampled by the Gaussian distribution centered at point $x$. Let $p_1$ and $p_2$ be the probabilities of sampling the top two most likely classes. Then, $\bar{f}$ is guaranteed to be constant within an $\ell_2$-ball of radius
\[R = \frac{\sigma}{2} \left( \Phi^{-1}(p_1) - \Phi^{-1}(p_2) \right)\]
where $\Phi^{-1}$ is the inverse CDF of the standard Gaussian distribution~\cite{cohen19}.
For a practical certification algorithm, a lower bound $\underline{p_1}$ on $p_1$ and an upper bound $\overline{p_2} = 1 - \underline{p_1}$ on $p_2$, with probability $1-\alpha$ for a given $\alpha \in (0, 1)$, are obtained and the certified radius is given by $R = \sigma \Phi^{-1}(\underline{p_1})$. This analysis is tight for $\ell_2$ perturbations; the bound is achieved by a worst-case classifier in which all the points in the top-class are restricted to a half-space separated by a hyperplane orthogonal to the direction of the perturbation.

In our discussion, we diverge from the standard notation described above, and assume that the base classifier $f$ maps points in $\mathbb{R}^d$ to a $k$-tuple of confidence scores. Thus, $f: \mathbb{R}^d \rightarrow (a, b)^k$ for some $a, b \in \mathbb{R}$ and $a<b$\footnote{$(a, b)$ denotes the open interval between $a$ and $b$.}. We define the smoothed version of the classifier as
\[\bar{f}(x) = \underset{\delta \sim \mathcal{N}(0, \sigma^2 I)}{\mathbb{E}}[f(x + \delta)],\]
which is the expectation of the class scores under the Gaussian distribution centered at $x$. The final prediction is made by taking an argmax of the expected scores.
This definition has been studied by \citeauthor{SalmanLRZZBY19} in~\cite{SalmanLRZZBY19} to develop an attack against smoothed classifiers which when used in an adversarial training setting helps boost the performance of conventional smoothing.
The goal of this work is to identify a radius around an image $x$ within which the expected confidence score of the predicted class $i$, i.e. $\bar{f}_i(x) = \mathbb{E}[f_i(x + \delta)]$, remains above a given threshold $c \in (a,b)$\footnote{$f_i(x)$ denotes the $i$th component of $f(x)$}.

We measure confidence using two different notions. The first measure is the average prediction score of a class as output by the final softmax layer. We denote the prediction score function with $h: \mathbb{R}^d \rightarrow (0, 1)^k$ and define the average for class $i$ as $\bar{h}_i(x) = \mathbb{E}[h_i(x + \delta)]$. The second one is the margin $m_i(x) = h_i(x) - \max_{j \neq i} h_j(x)$ by which class $i$ beats every other class in the softmax prediction score.
In section~\ref{sec:conf_measures}, we show that the expected margin $\bar{m}_i(x) = \mathbb{E}[m_i(x + \delta)]$ for the predicted class is a lower-bound on the gap in average prediction scores of the top two class labels. Thus, $\bar{m}_i(x) > 0$ implies that $i$ is the predicted class.

\section{Certifying Confidence Scores}

Standard Gaussian smoothing for establishing certified class labels essentially works by averaging binary (0/1) votes from every image in a Gaussian cloud around the input image, $x$. It then establishes the worst-case class boundary given the recorded vote, and produces a certificate.  The same machinery can be applied to produce a naive certificate for confidence score; rather than averaging binary votes, we simply average scores.  We then produce the worst-case class distribution, in which each class lives in a separate half-space, and generate a certificate for this worst case.

However, the naive certificate described above throws away a lot of information.  When continuous-values scores are recorded, we obtain not only the average score, but also the {\em distribution} of scores around the input point. By using this distributional information, we can potentially create a much stronger certificate. 

To see why, consider the extreme case of a ``flat'' classifier function for which every sample in the Gaussian cloud around $x$ returns the same top-class prediction score of 0.55. In this case, the average score is 0.55 as well. For a function where the {\em distribution} of score votes is concentrated at 0.55 (or any other value great than \nicefrac{1}{2}), the average score will always remain at 0.55 for {\em any} perturbation to $x$, thus yielding an infinite certified radius. However, when using the naive approach that throws away the distribution, the worst-case class boundary with average vote 0.55 is one with confidence score 1.0 everywhere in a half-space occupying 0.55 probability, and 0.0 in a half-space with 0.45 probability.  This worst-case, which uses only the average vote, produces a very small certified radius, in contrast to the infinite radius we could obtain from observing the distribution of votes.

Below, we first provide a simple bound that produces a certificate by averaging scores around the input image, and directly applying the framework from \cite{cohen19}.  Then, we describe a more refined method that uses distributional information to obtain stronger bounds. 

\subsection{A baseline method using Gaussian means}
\label{sec:naive_bnd}
In this section, we describe a method that uses only the average confidence over the Gaussian distribution surrounding $x$, and not the distribution of values, to bound how much the expected score can change when $x$ is perturbed with an $\ell_2$ radius of $R$ units.  This is a straightforward extension of \citeauthor{cohen19}'s~\cite{cohen19} work to our framework. It shows that regardless the behaviour of the base classifier $f$, its smoothed version $\bar{f}$ changes slowly which is similar to the observation of bounded Lipschitz-ness made by \citeauthor{SalmanLRZZBY19} in~\cite{SalmanLRZZBY19} (Lemma 2).
The worst-case classifier in this case assumes value $a$ in one half space and $b$ in other, with a linear boundary between the two as illustrated in figure~\ref{fig:naive_worst_case}.
The following theorem formally states the bounds, the proof of which is deferred to the appendix\footnote{A separate proof, using Lemma 2 from \citeauthor{SalmanLRZZBY19} in~\cite{SalmanLRZZBY19}, for this theorem for $\sigma = 1$ is also included in the appendix.}.
\begin{theorem}
\label{thm:naive_bnd}
Let $\underline{e_i}(x)$ and $\overline{e_i}(x)$ be a lower-bound and an upper-bound respectively on the expected score $\bar{f}_i(x)$ for class $i$ and, let $\underline{p_i}(x) = \frac{\underline{e_i}(x) - a}{b - a}$ and $\overline{p_i}(x) = \frac{\overline{e_i}(x) - a}{b - a}$. Then, for a perturbation $x'$ of the input $x$, such that, $\norm{x' - x}_2 \leq R$,
\begin{equation}
\label{ineq:naive_lbd}
\bar{f}_i(x') \geq b \Phi_\sigma ( \Phi_\sigma^{-1} (\underline{p_i}(x)) - R) + a (1 - \Phi_\sigma ( \Phi_\sigma^{-1} (\underline{p_i}(x)) - R))
\end{equation}
and
\[\bar{f}_i(x') \leq b \Phi_\sigma ( \Phi_\sigma^{-1} (\overline{p_i}(x)) + R) + a (1 - \Phi_\sigma ( \Phi_\sigma^{-1} (\overline{p_i}(x)) + R))\]
where $\Phi_\sigma$ is the CDF of the univariate Gaussian distribution with $\sigma^2$ variance, i.e., $\mathcal{N}(0, \sigma^2)$.
\end{theorem}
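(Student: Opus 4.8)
The plan is to reduce the theorem to the standard Neyman--Pearson worst-case analysis of \cite{cohen19} after an affine rescaling of the scores. Since each coordinate $f_i$ takes values in $(a,b)$, I would define the normalized score $g_i(z) = (f_i(z) - a)/(b-a) \in (0,1)$, so that $\bar g_i(x) = \mathbb{E}[g_i(x+\delta)] = (\bar f_i(x) - a)/(b-a)$ and $\underline{p_i}(x), \overline{p_i}(x)$ become exactly a lower and an upper bound on $\bar g_i(x)$. Because $f_i = a + (b-a) g_i$ is affine in $g_i$, any bound $\bar g_i(x') \ge q$ translates immediately into $\bar f_i(x') \ge a + (b-a) q = b q + a(1-q)$, which is precisely the right-hand side of \eqref{ineq:naive_lbd} once $q = \Phi_\sigma(\Phi_\sigma^{-1}(\underline{p_i}(x)) - R)$. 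So the whole problem collapses to controlling how far the Gaussian-averaged value of a $[0,1]$-valued function can drop under an $\ell_2$-bounded shift of the center.

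For that control, I would set $\mu_0 = \mathcal N(x, \sigma^2 I)$ and $\mu_1 = \mathcal N(x', \sigma^2 I)$ and consider the worst-case program: minimize $\int g \, d\mu_1$ over all measurable $g:\mathbb{R}^d \to [0,1]$ subject to $\int g \, d\mu_0 \ge \underline{p_i}(x)$. Both the objective and the constraint are linear functionals of $g$ and the feasible set is convex, so the minimum is attained at an extreme point. The Neyman--Pearson lemma identifies it: the optimal $g^*$ concentrates where the likelihood ratio $d\mu_1/d\mu_0$ is smallest, i.e. $g^* = \mathbf{1}_H$ for the half-space $H = \{z : \langle z - x, u\rangle \le c\}$ with $u = (x'-x)/\norm{x'-x}_2$, and with $c$ chosen to make the constraint tight. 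The key point is that even though $g$ is allowed to be continuous-valued, the minimizer is two-valued (bang-bang) and coincides with Cohen's worst-case classifier taking value $b$ on one side of a hyperplane and $a$ on the other. Equivalently, one can bypass re-proving Neyman--Pearson by reading $g_i(z)$ as the acceptance probability of a randomized binary classifier and invoking \cite{cohen19} verbatim.

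It then remains to evaluate the two Gaussian measures of $H$. Projecting onto $u$, the coordinate $\langle z - x, u\rangle$ is distributed as $\mathcal N(0,\sigma^2)$ under $\mu_0$ and as $\mathcal N(\norm{x'-x}_2, \sigma^2)$ under $\mu_1$. Tightness of the constraint gives $\mu_0(H) = \Phi_\sigma(c) = \underline{p_i}(x)$, hence $c = \Phi_\sigma^{-1}(\underline{p_i}(x))$, and therefore $\mu_1(H) = \Phi_\sigma(c - \norm{x'-x}_2)$. Since $\Phi_\sigma$ is increasing, this is minimized over $\norm{x'-x}_2 \le R$ at $\norm{x'-x}_2 = R$, yielding $\bar g_i(x') \ge \Phi_\sigma(\Phi_\sigma^{-1}(\underline{p_i}(x)) - R)$; rescaling back produces \eqref{ineq:naive_lbd}. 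The upper bound is the mirror image: maximize $\int g\,d\mu_1$ subject to $\int g\, d\mu_0 \le \overline{p_i}(x)$, whose optimizer is the complementary half-space, and the symmetry identity $\Phi_\sigma^{-1}(1-p) = -\Phi_\sigma^{-1}(p)$ converts the resulting expression into $\Phi_\sigma(\Phi_\sigma^{-1}(\overline{p_i}(x)) + R)$.

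I expect the only genuine subtlety to be justifying Neyman--Pearson optimality in the soft ($[0,1]$-valued) setting --- arguing that the continuous relaxation buys the adversary nothing and that the extremal $g$ really is a half-space indicator --- together with the measurability and attainment technicalities of the infinite-dimensional linear program. The monotonicity argument reducing $\norm{x'-x}_2 \le R$ to the boundary case, and the CDF computations themselves, are routine once that optimality is in hand.
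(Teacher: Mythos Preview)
Your proposal is correct and essentially the same as the paper's main proof: the paper proves a modified Neyman--Pearson lemma directly for $(a,b)$-valued functions (rather than rescaling first to $[0,1]$ as you do) via exactly the integral inequality that underlies your extreme-point/bang-bang argument, and then applies it with $\mu_X=\mathcal N(x,\sigma^2 I)$, $\mu_Y=\mathcal N(x',\sigma^2 I)$ to obtain the half-space worst case and the $\Phi_\sigma$ computation. The paper also gives an alternate proof (for $\sigma=1$) via the $1$-Lipschitzness of $\Phi^{-1}\circ \bar g$ from \cite{SalmanLRZZBY19}, which you do not use but which is equivalent.
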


\subsection{Proposed certificate}
\label{sec:cdf_method}
The bounds in section~\ref{sec:naive_bnd} are a simple application of the Neyman-Pearson lemma to our framework.
But this method discards a lot of information about how the class scores are distributed in the Gaussian around the input point.  Rather than consolidating the confidence scores from the samples into an expectation, we propose a method that uses the cumulative distribution function of the confidence scores to obtain improved bounds on the expected class scores.

Given an input $x$, we draw $m$ samples from the Gaussian distribution around $x$. We use the prediction of the base classifier $f$ on these points to generate bounds on the distribution function of the scores for the predicted class. These bounds, in turn, allow us to bound the amount by which the expected score of the class will decrease under an $\ell_2$ perturbation. Finally, we apply binary search to compute the radius for which this lower bound on the expected score remains above $c$.

Consider the sampling of scores around an image $x$ using a Gaussian distribution.  Let the probability with which the score of class $i$ is above $s$ be
\[p_{i, s}(x) = \underset{\delta \sim \mathcal{N}(0, \sigma^2 I)}{\mathbb{P}}(f_i(x + \delta) \geq s).\]
For point $x$ and class $i$, consider the random variable $Z = -f_i(x + \delta)$ where $\delta \sim \mathcal{N}(0, \sigma^2 I)$. Let $F(s) = \mathbb{P}(Z \leq s)$ be the cumulative distribution function of $Z$ and $F_m(s) = \frac{1}{m} \sum_{j=1}^m \mathbf{1}\{Z_j \leq s\}$ be its empirical estimate. For a given $\alpha \in (0, 1)$, the Dvoretzky–Kiefer–Wolfowitz inequality~\cite{dvoretzky1956} says that, with probability $1-\alpha$, the true CDF is bounded by the empirical CDF as follows:
\[F_m(s) - \epsilon \leq F(s) \leq F_m(s) + \epsilon, \forall s, \]
where $\epsilon = \sqrt{\frac{\ln{2/\alpha}}{2m}}$. Thus, $p_{i, s}(x)$ is also bounded within $\pm \epsilon$ of its empirical estimate $\sum_{j=1}^m \mathbf{1}\{ f_i(x + \delta_j) \geq s\}$. 

The following theorem bounds the expected class score under an $\ell_2$ perturbation using bounds on the cumulative distribution of the scores.
\begin{theorem}
\label{thm:exp_bnds}
Let, for class $i$, $a < s_1 \leq s_2 \leq \cdots \leq s_n < b$ be $n$ real numbers and let $\overline{p_{i, s_j}}(x)$ and $\underline{p_{i, s_j}}(x)$ be upper and lower bounds on $p_{i, s_j}(x)$ respectively derived using the Dvoretzky–Kiefer–Wolfowitz inequality, with probability $1-\alpha$, for a given $\alpha \in (0, 1)$. Then, for a perturbation $x'$ of the input $x$, such that, $\norm{x' - x}_2 \leq R$,
\begin{equation}
\label{ineq:cdf_lbd}
    \bar{f}_i(x') \geq a + (s_1 - a) \Phi_\sigma(\Phi_\sigma^{-1}(\underline{p_{i, s_1}}(x)) - R) + \sum_{j = 2}^n (s_j - s_{j-1}) \Phi_\sigma(\Phi_\sigma^{-1}(\underline{p_{i, s_j}}(x)) - R)
\end{equation}
and
\[\bar{f}_i(x') \leq s_1 + (b - s_n) \Phi_\sigma(\Phi_\sigma^{-1}(\overline{p_{i, s_n}}(x)) + R) + \sum_{j = 1}^{n-1} (s_{j+1} - s_j) \Phi_\sigma(\Phi_\sigma^{-1}(\overline{p_{i, s_j}}(x)) + R) \]
where $\Phi_\sigma$ is the CDF of the univariate Gaussian distribution with $\sigma^2$ variance, i.e., $\mathcal{N}(0, \sigma^2)$. 
\end{theorem}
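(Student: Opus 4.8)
The plan is to express the expected score as an integral of the tail (survival) function of the score distribution, approximate that integral by step functions using monotonicity, and then transport each tail probability from the perturbed point $x'$ back to $x$ using the same Neyman--Pearson argument that underlies Theorem~\ref{thm:naive_bnd}. First I would invoke the layer-cake identity: since $Y = f_i(x'+\delta)$ takes values in $(a,b)$,
\[
\bar{f}_i(x') = \mathbb{E}[Y] = a + \int_a^b \mathbb{P}(f_i(x'+\delta) \geq s)\,ds = a + \int_a^b p_{i,s}(x')\,ds.
\]
The key structural fact is that $s \mapsto p_{i,s}(x')$ is non-increasing, because the event $\{f_i(x'+\delta) \geq s\}$ shrinks as $s$ grows. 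This monotonicity is exactly what lets me replace the integral by the finite sums in the statement.

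Next I would discretize the integral over the partition $a < s_1 \le \cdots \le s_n < b$. For the lower bound I bound $p_{i,s}(x')$ below on each subinterval by its right-endpoint value: $p_{i,s}(x') \ge p_{i,s_1}(x')$ on $[a,s_1]$, $p_{i,s}(x') \ge p_{i,s_j}(x')$ on $[s_{j-1},s_j]$, and I simply drop the nonnegative contribution on $[s_n,b]$. This yields
\[
\bar{f}_i(x') \ge a + (s_1-a)\,p_{i,s_1}(x') + \sum_{j=2}^n (s_j-s_{j-1})\,p_{i,s_j}(x').
\]
For the upper bound I instead use left-endpoint values, bound $p_{i,s}(x') \le 1$ on $[a,s_1]$ (since $f_i > a$ almost surely) and $p_{i,s}(x') \le p_{i,s_j}(x')$ on $[s_j,s_{j+1}]$ and $[s_n,b]$; collecting the constant term $a + (s_1-a)$ into $s_1$ produces the form in the theorem.

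Finally I would transport each tail probability. Fixing the level set $A_j = \{y : f_i(y) \ge s_j\}$, the quantity $p_{i,s_j}(\cdot)$ is just the Gaussian measure of $A_j$ about the respective center, so the half-space worst-case inequality of \cite{cohen19} applies directly and gives $p_{i,s_j}(x') \ge \Phi_\sigma(\Phi_\sigma^{-1}(p_{i,s_j}(x)) - R)$ and $p_{i,s_j}(x') \le \Phi_\sigma(\Phi_\sigma^{-1}(p_{i,s_j}(x)) + R)$ whenever $\norm{x'-x}_2 \le R$. Substituting these, and then replacing $p_{i,s_j}(x)$ by its Dvoretzky--Kiefer--Wolfowitz bound $\underline{p_{i,s_j}}(x)$ or $\overline{p_{i,s_j}}(x)$ using monotonicity of $\Phi_\sigma$ and $\Phi_\sigma^{-1}$, yields exactly the two inequalities claimed.

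The step I expect to require the most care is this last transport step: the Neyman--Pearson bound is applied separately to each $A_j$, so I must confirm that a termwise application is legitimate rather than requiring a single classifier that is simultaneously worst-case for all $s_j$. It is legitimate precisely because each $p_{i,s_j}(x')$ is bounded independently and then summed. I also need to track the $\sigma$-scaling $\Phi_\sigma(t) = \Phi(t/\sigma)$ so that the perturbation enters as $-R$ inside $\Phi_\sigma$ rather than $-R/\sigma$ inside the standard $\Phi$. As a sanity check, the nested structure $A_1 \supseteq A_2 \supseteq \cdots \supseteq A_n$ forced by $s_1 \le \cdots \le s_n$ means a single classifier that is monotone in $\langle z, x'-x\rangle$ attains all the half-space worst cases at once, so the resulting bound is in fact tight.
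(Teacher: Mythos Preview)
Your proposal is correct and follows essentially the same route as the paper: both arguments amount to a layer-cake/summation-by-parts bound on $\mathbb{E}[f_i(x'+\delta)]$ in terms of the tail probabilities $\mathbb{P}(f_i(x'+\delta)\ge s_j)$, followed by a termwise application of the Neyman--Pearson half-space bound under the Gaussian shift and a final substitution of the DKW estimates. The only cosmetic difference is that you start from the continuous layer-cake integral and discretize, whereas the paper writes the telescoping discrete inequality directly and packages the transport step as a separately stated lemma (Lemma~\ref{lem:N-P}); the content is the same.
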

The above bounds are tight for $\ell_2$ perturbations. The worst-case classifier for the lower bound is one in which the class score decreases from $b$ to $a$ in steps, taking values $s_n, s_{n-1}, \ldots, s_1$ at each level. Figure~\ref{fig:worst_case} illustrates this case for three intermediate levels.
A similar worst-case scenario can be constructed for the upper bound as well where the class score increases from $a$ to $b$ along the direction of the perturbation. Even though our theoretical results allow us to derive both upper and lower bounds for the expected scores, we restrict ourselves to the lower bound in our experimental results.
We provide a proof sketch for this theorem in section~\ref{sec:cert_rad}.
Our experimental results show that the CDF-based approach beats the naive bounds in practice by a significant margin, showing that having more information about the classifier at the input point can help achieve better guarantees.

\begin{figure}[t]
\centering
\begin{subfigure}{.3\textwidth}
    \vspace{27mm}
    \centering
    \includegraphics[width=\linewidth]{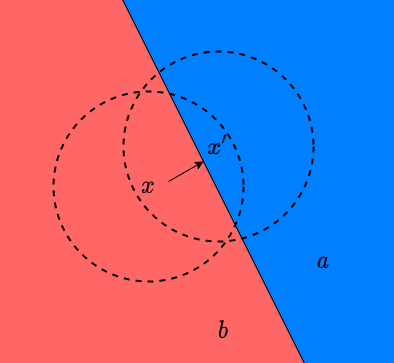}
    \caption{Naive classifier}
    \label{fig:naive_worst_case}
\end{subfigure}
\begin{subfigure}{.65\textwidth}
  \centering
    \includegraphics[width=\linewidth]{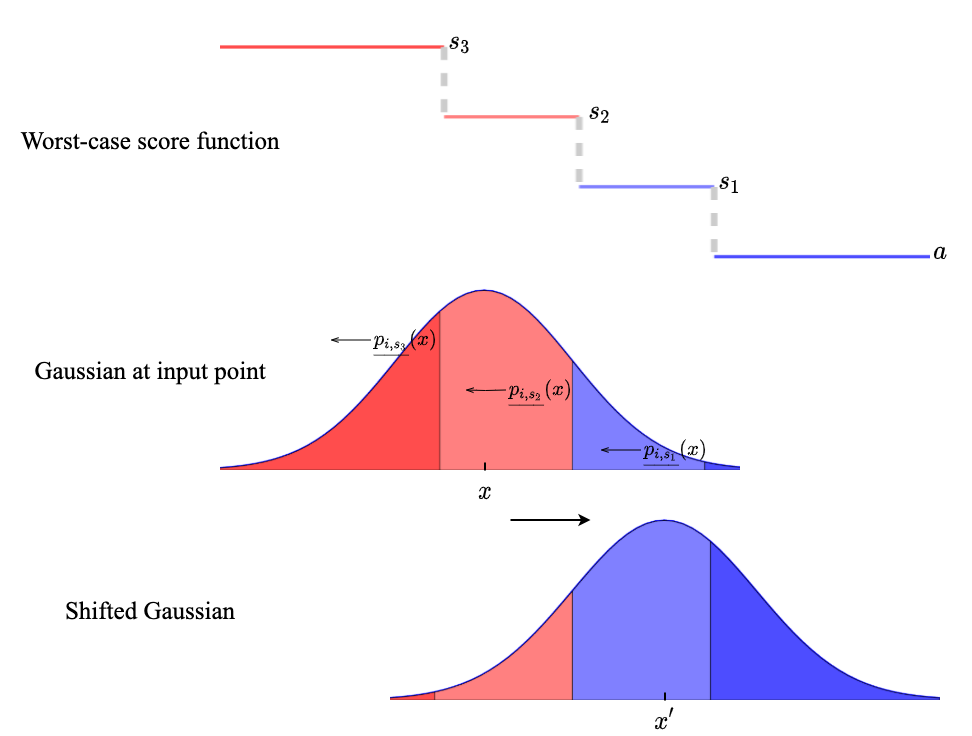}\vspace{-3mm}
    \caption{CDF-based classifier}
    \label{fig:worst_case}
\end{subfigure}
\caption{Worst case classifier behaviour using (a) naive approach and (b) CDF-based method. As the center of the distribution moves from $x$ to $x'$, the probability mass of the higher values of the score function (indicated in red) decreases and that of the lower values (indicated in blue) increases, bringing down the value of the expected score. 
}

\end{figure}


\textbf{Computing the certified radius}\quad  Both the bounds in theorem \ref{thm:exp_bnds} monotonic in $R$. So, in order to find a certified radius, up to a precision $\tau$, such that the lower (upper) bound is above (below) a certain threshold we can apply binary search which will require at most $O(\log(1/\tau))$ evaluations of the bound.

\subsection{Proof of Theorem \ref{thm:exp_bnds}}
\label{sec:cert_rad}
We present a brief proof for theorem~\ref{thm:exp_bnds}.
We use a slightly modified version of the Neyman-Pearson lemma (stated in~\cite{cohen19}) which we prove in the appendix.
\begin{lemma}[Neyman \& Pearson, 1933]
\label{lem:N-P}
Let $X$ and $Y$ be random variables in $\mathbb{R}^d$ with densities $\mu_X$ and $\mu_Y$. Let $h: \mathbb{R}^d \rightarrow (a, b)$ be a function. Then:
\begin{enumerate}
    \item If $S = \left\{ z \in \mathbb{R}^d \mid \frac{\mu_Y(z)}{\mu_X(z)} \leq t \right\}$ for some $t > 0$ and $\mathbb{P}(h(X) \geq s) \geq \mathbb{P}(X \in S)$, then $\mathbb{P}(h(Y) \geq s) \geq \mathbb{P}(Y \in S)$.
    \item If $S = \left\{ z \in \mathbb{R}^d \mid \frac{\mu_Y(z)}{\mu_X(z)} \geq t \right\}$ for some $t > 0$ and $\mathbb{P}(h(X) \geq s) \leq \mathbb{P}(X \in S)$, then $\mathbb{P}(h(Y) \geq s) \leq \mathbb{P}(Y \in S)$.
\end{enumerate}
\end{lemma}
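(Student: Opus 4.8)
The plan is to prove this as the classical Neyman--Pearson exchange argument, adapted so that the role of the ``rejection region'' is played by the fixed superlevel set of $h$ rather than a general test. The entire statement collapses to a single pointwise comparison between $\mu_Y$ and $t\mu_X$, whose sign is fixed on $S$ and on its complement by the very definition of $S$.

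First I would convert the probabilistic hypotheses and conclusions into integrals over fixed sets. Writing $A = \{z \in \mathbb{R}^d : h(z) \ge s\}$, we have $\mathbb{P}(h(X) \ge s) = \int_A \mu_X$ and $\mathbb{P}(h(Y) \ge s) = \int_A \mu_Y$, while $\mathbb{P}(X \in S) = \int_S \mu_X$ and $\mathbb{P}(Y \in S) = \int_S \mu_Y$. Part 1 then reads as the implication $\int_A \mu_X \ge \int_S \mu_X \Rightarrow \int_A \mu_Y \ge \int_S \mu_Y$. Cancelling the common overlap $A \cap S$ from both sides gives the key decomposition $\int_A \mu_Y - \int_S \mu_Y = \int_{A \cap S^c} \mu_Y - \int_{A^c \cap S} \mu_Y$, and the same identity holds with $\mu_X$ in place of $\mu_Y$.

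The heart of the argument is the pointwise inequality. By definition of $S = \{\mu_Y \le t\mu_X\}$ we have $\mu_Y \le t\mu_X$ throughout $S$ and $\mu_Y \ge t\mu_X$ throughout $S^c$ (adopting the convention that a point with $\mu_X = 0$ has infinite ratio and so lies in $S^c$, where $\mu_Y \ge 0 = t\mu_X$ still holds). Applying these on the two disjoint regions gives $\int_{A \cap S^c} \mu_Y \ge t \int_{A \cap S^c} \mu_X$ and $\int_{A^c \cap S} \mu_Y \le t \int_{A^c \cap S} \mu_X$, whence $\int_A \mu_Y - \int_S \mu_Y \ge t\left(\int_{A \cap S^c} \mu_X - \int_{A^c \cap S} \mu_X\right) = t\left(\int_A \mu_X - \int_S \mu_X\right) \ge 0$, using the decomposition in reverse and then the hypothesis together with $t > 0$. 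This proves part 1. Part 2 is the mirror image: with $S = \{\mu_Y \ge t\mu_X\}$ both pointwise inequalities flip, and the reversed hypothesis $\int_A \mu_X \le \int_S \mu_X$ drives the conclusion $\int_A \mu_Y \le \int_S \mu_Y$.

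I do not expect a substantive obstacle; the difficulty is entirely organizational, namely keeping the set algebra straight and tracking the direction of each inequality, with the only delicate points being the measure-zero boundary $\{\mu_Y = t\mu_X\}$ and the zero-density set $\{\mu_X = 0\}$, both absorbed by the weak inequalities above. The one conceptual remark I would make explicit is that the proof never uses anything about $h$ beyond $\{h \ge s\}$ being a measurable event, which is precisely what will let the lemma be invoked separately at each threshold $s_j$ when proving Theorem~\ref{thm:exp_bnds}.
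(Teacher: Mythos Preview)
Your argument is correct and is essentially identical to the paper's: the paper writes the same decomposition using indicator functions, obtaining $\int_{S^c} \mathbf{1}\{h \geq s\}\,\mu_Y - \int_{S}(1 - \mathbf{1}\{h \geq s\})\,\mu_Y$, which is exactly your $\int_{A \cap S^c}\mu_Y - \int_{A^c \cap S}\mu_Y$, and then applies the pointwise comparison $\mu_Y \gtrless t\mu_X$ on $S^c$ and $S$ in the same way. The only cosmetic difference is your explicit handling of the $\mu_X = 0$ set, which the paper leaves implicit.
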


Set $X$ to be the smoothing distribution at an input point $x$ and $Y$ to be that at $x + \epsilon$ for some perturbation vector $\epsilon$.
For a class $i$, define sets $\underline{S}_{i, j} = \{ z \in \mathbb{R}^d \mid \mu_Y(z) / \mu_X(z) \leq t_{i, j} \}$ for some $t_{i, j} > 0$, such that, $\mathbb{P}(X \in \underline{S}_{i, j}) = \underline{p_{i, s_j}}(x)$.
Similarly, define sets $\overline{S}_{i, j} = \{ z \in \mathbb{R}^d \mid \mu_Y(z) / \mu_X(z) \geq t'_{i, j} \}$ for some $t'_{i, j} > 0$, such that, $\mathbb{P}(X \in \overline{S}_{i, j}) = \overline{p_{i, s_j}}(x)$.
Since, $\mathbb{P}(f_i(X) \geq s_j) \geq \mathbb{P}(X \in \underline{S}_{i,j})$, using lemma~\ref{lem:N-P} we can say that $\mathbb{P}(f_i(Y) \geq s_i) \geq \mathbb{P}(Y \in \underline{S}_{i,j})$. Therefore,
\begin{align*}
    \mathbb{E}[f_i(Y)] &\geq s_n \mathbb{P}(f_i(Y) \geq s_n) + s_{n-1} (\mathbb{P}(f_i(Y) \geq s_{n-1}) - \mathbb{P}(f_i(Y) \geq s_n))\\
    & + \cdots + s_1 (\mathbb{P}(f_i(Y) \geq s_1) - \mathbb{P}(f_i(Y) \geq s_2)) + a (1 - \mathbb{P}(f_i(Y) \geq s_1))\\
    &= a + (s_1 - a) \mathbb{P}(f_i(Y) \geq s_1) + \sum_{j = 2}^n (s_j - s_{j-1}) \mathbb{P}(f_i(Y) \geq s_j)\\
    & \geq a + (s_1 -a) \mathbb{P}(Y \in \underline{S}_{i,1}) + \sum_{j = 2}^n (s_j - s_{j-1}) \mathbb{P}(Y \in \underline{S}_{i,j}).
\end{align*}
Similarly, $\mathbb{P}(f_i(X) \geq s_j) \leq \mathbb{P}(X \in \overline{S}_{i, j})$ implies $\mathbb{P}(f_i(Y) \geq s_j) \leq \mathbb{P}(Y \in \overline{S}_{i, j})$ as per lemma~\ref{lem:N-P}. Therefore,
\begin{align*}
    \mathbb{E}[f_i(Y)] &\leq b \mathbb{P}(f_i(Y) \geq s_n) + s_n (\mathbb{P}(f_i(Y) \geq s_{n-1}) - \mathbb{P}(f_i(Y) \geq s_n))\\
    & + \cdots + s_1 (1 - \mathbb{P}(f_i(Y) \geq s_1))\\
    &= (b - s_n) \mathbb{P}(f_i(Y) \geq s_n) + \sum_{j = 1}^{n-1} (s_{j+1} - s_j) \mathbb{P}(f_i(Y) \geq s_j) + s_1\\
    & \leq s_1 + (b - s_n) \mathbb{P}(Y \in \overline{S}_{i,n}) + \sum_{j = 1}^{n-1} (s_{j+1} - s_j) \mathbb{P}(Y \in \overline{S}_{i, j}).
\end{align*}

Since, we are smoothing using an isometric Gaussian distribution with $\sigma^2$ variance, $\mu_X = \mathcal{N}(x, \sigma^2 I)$ and $\mu_Y = \mathcal{N}(x + \epsilon, \sigma^2 I)$. Then, for some $t$ and $\beta$
\begin{align*}
    \frac{\mu_Y(z)}{\mu_Y(z)} \leq t \iff \epsilon^T z \leq \beta\\
    \frac{\mu_Y(z)}{\mu_Y(z)} \geq t \iff \epsilon^T z \geq \beta.
\end{align*}
Thus, each of the sets $\underline{S}_{i,j}$ and $\overline{S}_{i,j}$ is a half space defined by a hyper-plane orthogonal to the direction of the perturbation. This simplifies our analysis to one dimension, namely, the one along the perturbation.
For each of the sets $\underline{S}_{i,j}$ and $\overline{S}_{i,j}$, we can find a point on the real number line $\Phi_\sigma^{-1}(\underline{p_{i, s_j}}(x))$ and $\Phi_\sigma^{-1}(\overline{p_{i, s_j}}(x))$ respectively such that the probability of a Gaussian sample to fall in that set is equal to the Gaussian CDF at that point.
Therefore,
\[\bar{f}_i(x + \epsilon) \geq a + (s_1 - a) \Phi_\sigma(\Phi_\sigma^{-1}(\underline{p_{i, s_1}}(x)) - R) + \sum_{j = 2}^n (s_j - s_{j-1}) \Phi_\sigma(\Phi_\sigma^{-1}(\underline{p_{i, s_j}}(x)) - R)\]
and
\[\bar{f}_i(x + \epsilon) \leq s_1 + (b - s_n) \Phi_\sigma(\Phi_\sigma^{-1}(\overline{p_{i, s_n}}(x)) + R) + \sum_{j = 1}^{n-1} (s_{j+1} - s_j) \Phi_\sigma(\Phi_\sigma^{-1}(\overline{p_{i, s_j}}(x)) + R) \]
which completes the proof of theorem~\ref{thm:exp_bnds}.
We would like to note here that although we use the Gaussian distribution for smoothing, the modified Neyman-Pearson lemma does not make any assumptions on the shape of the distributions which allows for this proof to be adapted for other smoothing distributions as well.

\section{Confidence measures}
\label{sec:conf_measures}
We study two notions of confidence: average prediction score of a class and the margin of average prediction score between two classes. Usually, neural networks make their predictions by outputting a prediction score for each class and then taking the argmax of the scores. Let $h: \mathbb{R}^d \rightarrow (0, 1)^k$ be a classifier mapping input points to prediction scores between 0 and 1 for each class. We assume that the scores are generated by a softmax-like layer, i.e., $0 < h_i(x) < 1, \forall i \in \{1, \ldots, k\}$ and $\sum_i h_i(x) = 1$.
For $\delta \sim \mathcal{N}(0, \sigma^2 I)$, we define average prediction score for a class $i$ as
\[ \bar{h}_i(x) = \mathbb{E}[h_i(x + \delta)].\]
The final prediction for the smoothed classifier is made by taking an argmax over the average prediction scores of all the classes, i.e., $\text{argmax}_i \; \bar{h}_i(x)$. Thus, if for a class $j$, $\bar{h}_j(x) \geq 0.5$, then $j = \text{argmax}_i \; \bar{h}_i(x)$.

Now, we define margin $m$ at point $x$ for a class $i$ as
\[m_i(x) = h_i(x) - \max_{j \neq i} h_j(x).\]
Thus, if $i$ is the class with the highest prediction score, $m_i(x)$ is the lead it has over the second highest class (figure~\ref{fig:margin}). And, for any other class $m_i(x)$ is the negative of the difference of the scores of that class with the highest class. We define average margin at point $x$ under smoothing distribution $\mathcal{P}$ as
\[\bar{m}_i(x) = \mathbb{E}[m_i(x + \delta)].\]

\begin{wrapfigure}{r}{0.25\textwidth}
\centering
\vspace{-1cm}
\includegraphics[width=.24\columnwidth]{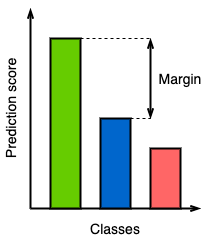}
\caption{Margin}
\vspace{-2cm}
\label{fig:margin}
\end{wrapfigure}

For a pair of classes $i$ and $j$, we have,
\begin{align*}
    \bar{h}_i(x) - \bar{h}_j(x) &= \mathbb{E}[h_i(x + \delta)] - \mathbb{E}[h_j(x + \delta)]\\
    &= \mathbb{E}[h_i(x + \delta) - h_j(x + \delta)]\\
    & \geq \mathbb{E}[h_i(x + \delta) - \max_{j \neq i} h_j(x + \delta)]\\
    &= \mathbb{E}[m_i(x + \delta)] = \bar{m}_i(x)\\
    \bar{h}_i(x) &\geq \bar{h}_j(x) + \bar{m}_i(x).
\end{align*}
Thus, if $\bar{m}_i(x) > 0$, then class $i$ must have the highest average prediction score making it the predicted class under this notion of confidence measure.

\section{Experiments}

We conduct several experiments to motivate the use of certified confidence, and to validate the effectiveness of our proposed CDF-based certificate.

\subsection{Does certified radius correlate with confidence score?}
\label{sec:scatter}

A classifier can fail because of an adversarial attack, or because of epistemic uncertainty -- a class label may be uncertain or wrong because of lack of useful features, or because the model was not trained on sufficient representative data.
The use of certified confidence is motivated by the observation that the original Gaussian averaging, which certifies the {\em security} of class labels, does not convey whether the user should be {\em confident} in the label because it neglects epistemic uncertainty. 
We demonstrate this with a simple experiment. In figure~\ref{fig:correlation}, we show plots of softmax prediction score vs. certified radius obtained using smoothed ResNet-110 and ResNet-50 classifiers trained by \citeauthor{cohen19} in~\cite{cohen19} for CIFAR-10 and ImageNet respectively. The noise level $\sigma$ used for this experiment was $0.25.$
For both models, the certified radii correlate very little with the prediction scores for the input images.
The CIFAR-10 plot has points with high scores but small radii. While, for ImageNet, we see a lot of points with low scores but high radii.
This motivates the need for certifying confidence; high radius does not imply high confidence of the underlying classifier. This lack of correlation is visualized in figure \ref{fig:images}.

\begin{wrapfigure}{r}{0.3\textwidth}
\centering
\vspace{-0.15cm}
\includegraphics[width=.3\columnwidth, trim=25mm 5mm 35mm 5mm, clip]{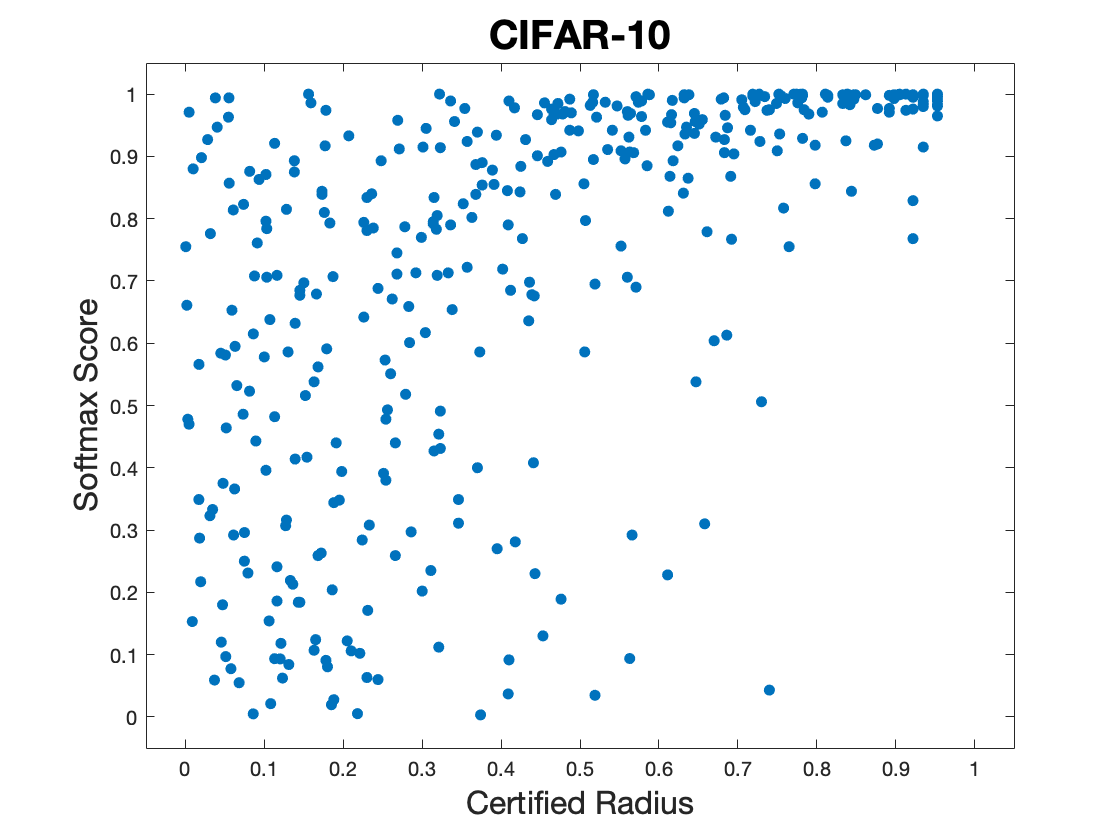}\vspace{3mm}\\
\includegraphics[width=.3\columnwidth, trim=25mm 5mm 35mm 5mm, clip]{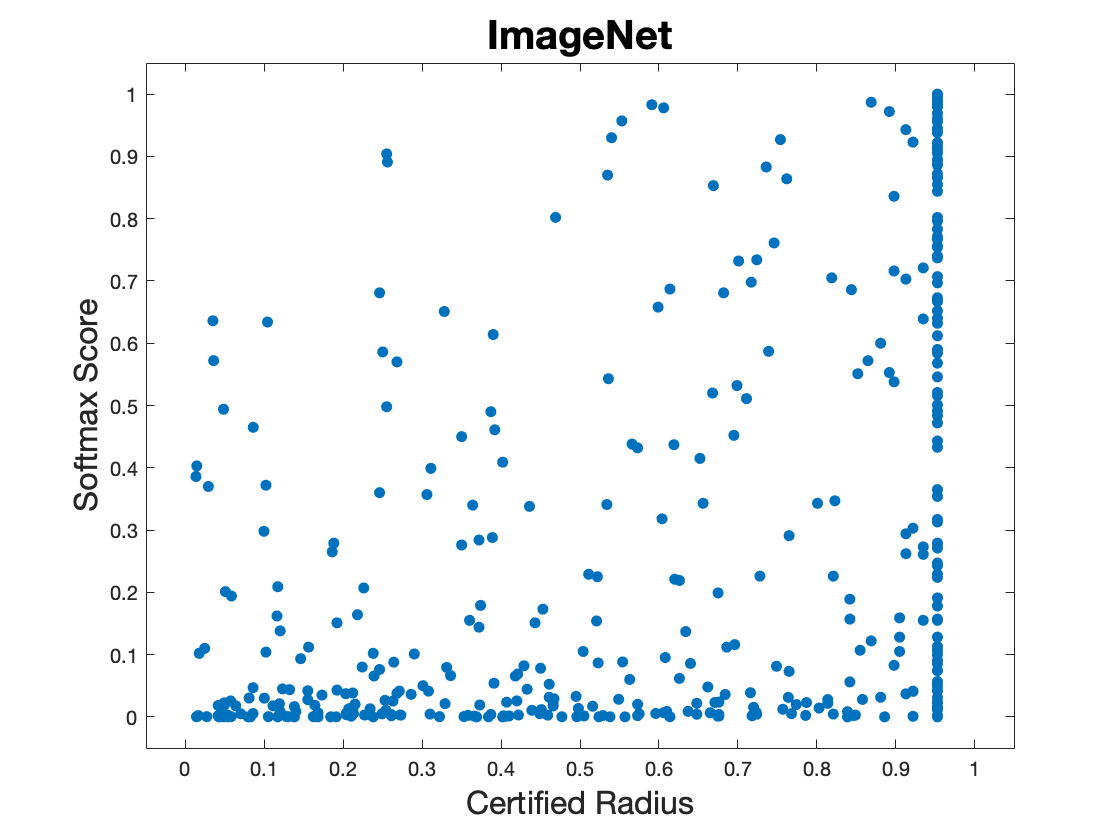}
\vspace{-0.3cm}
\caption{Prediction Score vs. Certified Radius.}
\vspace{-1.5cm}
\label{fig:correlation}
\end{wrapfigure}

In the plots, CIFAR-10 images tend to have a higher prediction score than ImageNet images which is potentially due to the fact that the ImageNet dataset has a lot more classes than the CIFAR-10 dataset, driving the softmax scores down. There is a hard limit ($\sim 0.95$ for ImageNet) on the largest radius that can be generated by \citeauthor{cohen19}'s certifying algorithm which causes a lot of the ImageNet points to accumulate at this radius value. This limit comes from the fact that even if all the samples around an input image vote for the same class, the lower-bound on the top-class probability is strictly less than one, which keeps the certified radius within a finite value.

\subsection{Evaluating the strength of bounds}
\label{subsec:exp_res}
We use the ResNet-110 and ResNet-50 models trained by \citeauthor{cohen19} in~\cite{cohen19} on CIFAR-10 and ImageNet datasets respectively to generate confidence certificates. These models have been pre-trained with varying Gaussian noise level $\sigma$ in the training data. We use the same $\sigma$ for certifying confidences as well.
We use the same number of samples $m = 100,000$ and value of $\alpha = 0.001$ as in~\cite{cohen19}.
We set $s_1, s_2. \ldots, s_n$ in theorem~\ref{thm:exp_bnds} such that the number of confidence score values falling in each of the intervals $(a, s_1), (s_1, s_2), \ldots, (s_n, b)$ is the same. We sort the scores from the $m$ samples in increasing order and set $s_i$ to be the element at position $1 + (i-1)m/n$ in the order. We chose this method of splitting the range $(a, b)$, instead of at regular steps, to keep the intervals well-balanced.
We present results for both notions of confidence measure: average prediction score and margin. Figure~\ref{fig:pred_score_025} plots certified accuracy, using the naive bound and the CDF-based method, for different threshold values for the top-class average prediction score and the margin at various radii for $\sigma = 0.25$. The same experiments for $\sigma = 0.50$ have been included in the appendix.

\begin{figure}[t]
\centering
\hspace{1.6cm} Radius$<0.2$, Confidence Score$>0.7$ \hspace{1cm} Radius$> 0.5$, Confidence Score$< 0.4$
\vspace{-0.1cm}
\begin{flushright}\includegraphics[width=0.88\textwidth, trim=1.1cm 15.5cm 1.1cm 9.8cm, clip]{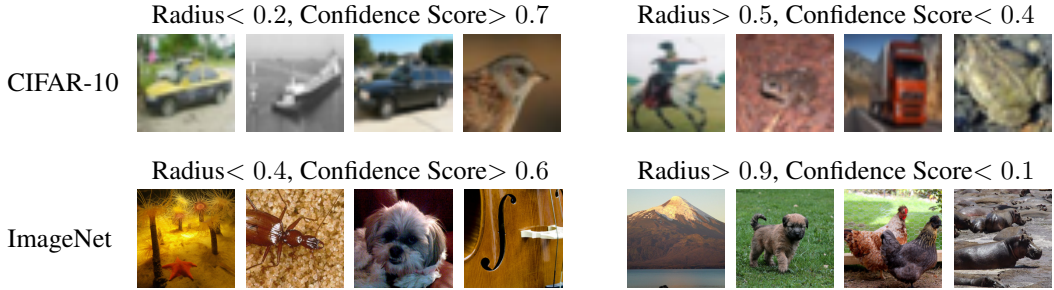}\end{flushright}
\begin{flushleft} \vspace{-1.45cm} CIFAR-10 \end{flushleft}
\vspace{0.6cm}
\hspace{1.6cm} Radius$< 0.4$, Confidence Score$>0.6$ \hspace{1cm} Radius$> 0.9$, Confidence Score$< 0.1$
\vspace{-0.1cm}
\begin{flushright}\includegraphics[width=0.88\textwidth, trim=1.1cm 21cm 1.1cm 4.3cm, clip]{figures/imagenet_pictures.pdf}\end{flushright}
\begin{flushleft} \vspace{-1.45cm} ImageNet \end{flushleft}
\vspace{0.5cm}
\caption{Certified radius does not correlate well with human visual confidence or network confidence score.  Low radius images on the left have high confidence scores, while the high radius images on the right all have low confidence scores. There is not a pronounced visual difference between low- and high-radius images. }
\label{fig:images}
\vspace{-5mm}
\end{figure}

Each line is for a given threshold for the confidence score. The solid lines represent certificates derived using the CDF bound and the dashed lines are for ones using the naive bound.
For the baseline certificate~(\ref{ineq:naive_lbd}), we use Hoeffding's inequality to get a lower-bound on the expected top-class confidence score $e_i(x)$, that holds with probability $1-\alpha$, for a given $\alpha \in (0, 1)$.
\[\underline{e_i}(x) = \frac{1}{m}\sum_{j=1}^m f_i(x + \delta_j) - (b-a) \sqrt{\frac{\ln(1/\alpha)}{2m}}\]
This bound is a reasonable choice because $\underline{p_i}(x)$ differs from the empirical estimate by the same amount $\sqrt{\ln(1/\alpha)/2m}$ as $\underline{p_{i, s}}(x)$ in the proposed CDF-based certificate. In the appendix, we also show that the baseline certificate, even with the best-possible lower-bound for $e_i(x)$, cannot beat our method for most cases.

We see a significant improvement in certified accuracy (e.g. at radius = 0.25) when certification is done using the CDF method instead of the naive bound. The confidence measure based on the margin between average prediction scores yields slightly better certified accuracy when thresholded at zero than the other measure. 
 
\begin{figure}[h]
\centering
\begin{subfigure}{.5\textwidth}
  \centering
  \includegraphics[width=\textwidth]{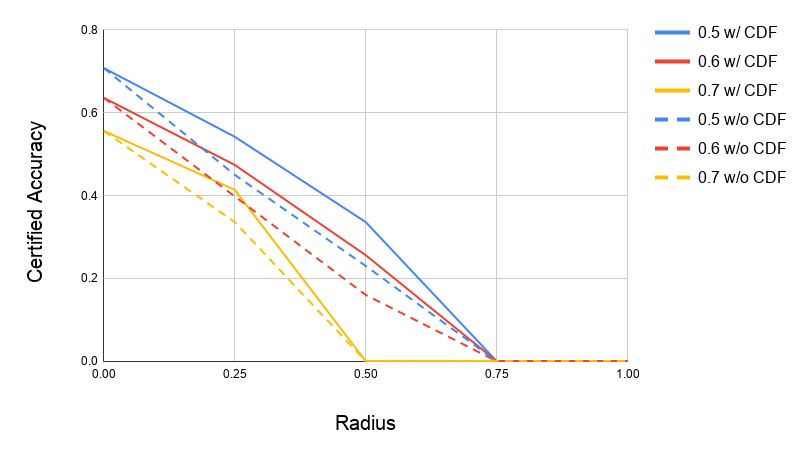}\vspace{-3mm}
  \caption{Average Prediction Score (CIFAR-10)}
\end{subfigure}%
\begin{subfigure}{.5\textwidth}
  \centering
  \includegraphics[width=\textwidth]{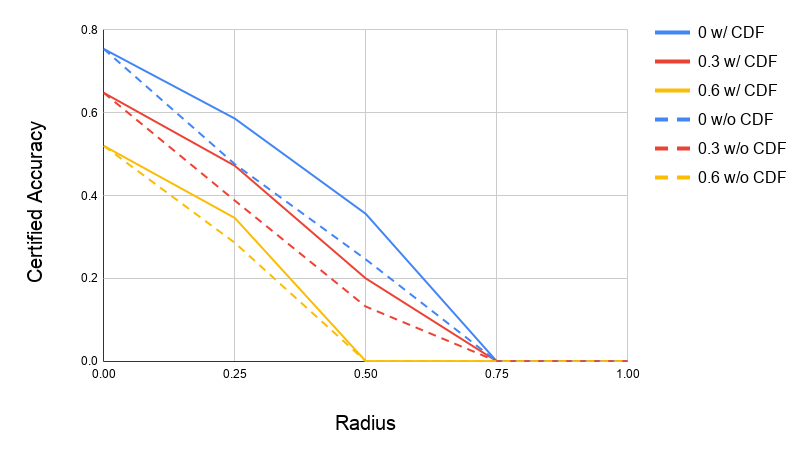}\vspace{-3mm}
  \caption{Margin (CIFAR-10)}
\end{subfigure}

\begin{subfigure}{.5\textwidth}
  \centering
  \includegraphics[width=\textwidth]{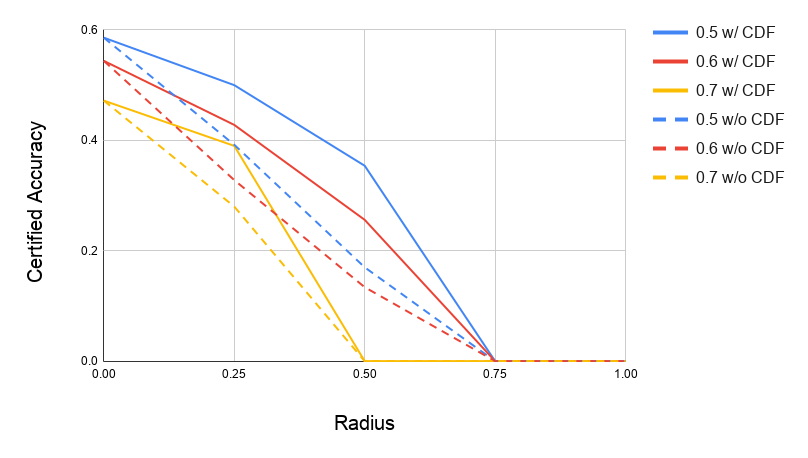}\vspace{-3mm}
  \caption{Average Prediction Score (ImageNet)}
\end{subfigure}%
\begin{subfigure}{.5\textwidth}
  \centering
  \includegraphics[width=\textwidth]{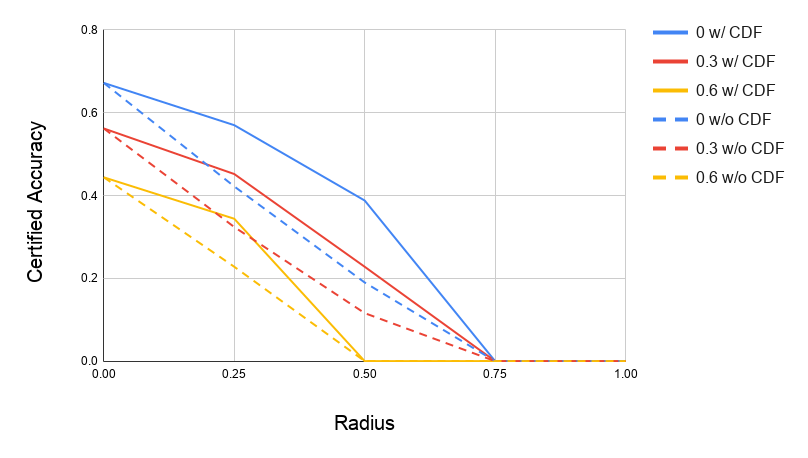}\vspace{-3mm}
  \caption{Margin (ImageNet)}
\end{subfigure}
\caption{Certified accuracy vs. radius (CIFAR-10 \& ImageNet) at different cutoffs for average confidence score with $\sigma=0.25$. Solid and dashed lines represent certificates computed with and without CDF bound respectively.}
\label{fig:pred_score_025}
\end{figure}

\section{Conclusion}
While standard certificates can guarantee that a decision is {\em secure}, they contain little information about how {\em confident} the user should be in the assigned label. We present a method that certifies the confidence scores, rather than the labels, of images.  By leveraging information about the distribution of confidence scores around an input image, we produce certificates that beat a naive bound based on a direct application of the Neyman-Pearson lemma.    The results in this work show that certificates can be strengthened by incorporating more information into the worst-case bound than just the average vote.  We hope this line of research leads to methods for strengthening smoothing certificates based on other information sources, such as properties of the base classifier or the spatial distribution of votes.


\section{Broader Impact}
We design procedures that equip randomized smoothing with certified prediction confidence, an important property for any real-world decision-making system to have. In applications where robustness is key, like credit scoring and disease diagnosis systems, it is important to know how certain the prediction model is about the output, so that a human expert can take over if the model's confidence is low.

However, this method does not produce any guarantees on the calibration of the underlying model itself. It could happen that the confidence measure used to determine the degree of certainty of the model does not actually reflect the probability of the prediction being correct. In other words, our methods depends on the underlying classifier to have high accuracy to perform reliably.  With certificates, there is always a risk of conveying a false sense of confidence, but hopefully by producing interpretable risk scores along with certificates our work will help mitigate this problem.

\section{Acknowledgments}
We would like to thank anonymous reviewers for their valuable comments and suggestions.
This work was supported by DARPA GARD, DARPA QED4RML, and the National Science Foundation Directorate of Mathematical Sciences. This project was supported in part by NSF CAREER AWARD 1942230, grants from NIST 303457-00001, HR001119S0026 and Simons Fellowship on ``Foundations of Deep Learning.''

\bibliography{references}

\appendix
\section{Proof of Theorem~\ref{thm:naive_bnd}}
We first prove a slightly modified version of the Neyman-Pearson lemma.
\begin{lemma}[Neyman \& Pearson, 1933]
Let $X$ and $Y$ be random variables in $\mathbb{R}^d$ with densities $\mu_X$ and $\mu_Y$. Let $h: \mathbb{R}^d \rightarrow (a, b)$ be a function. Then:
\begin{enumerate}
    \item If $S = \left\{ z \in \mathbb{R}^d \mid \frac{\mu_Y(z)}{\mu_X(z)} \leq t \right\}$ for some $t > 0$ and $\mathbb{E}[h(X)] \geq (b-a)\mathbb{P}(X \in S) + a$, then $\mathbb{E}[h(Y)] \geq (b-a)\mathbb{P}(Y \in S) + a$.
    \item If $S = \left\{ z \in \mathbb{R}^d \mid \frac{\mu_Y(z)}{\mu_X(z)} \geq t \right\}$ for some $t > 0$ and $\mathbb{E}[h(X)] \leq (b-a)\mathbb{P}(X \in S) + a$, then $\mathbb{E}[h(Y)] \leq (b-a)\mathbb{P}(Y \in S) + a$.
\end{enumerate}
\end{lemma}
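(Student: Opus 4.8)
The plan is to reduce to the normalized case $a=0$, $b=1$ and then run the classical Neyman--Pearson comparison argument, replacing the usual indicator test function with the bounded function $h$. First I would set $g(z) = (h(z)-a)/(b-a)$, so that $g : \mathbb{R}^d \rightarrow (0,1)$ and $\mathbb{E}[h(W)] = (b-a)\mathbb{E}[g(W)] + a$ for any random variable $W$. Substituting this identity, the hypothesis $\mathbb{E}[h(X)] \geq (b-a)\mathbb{P}(X\in S)+a$ becomes exactly $\mathbb{E}[g(X)] \geq \mathbb{P}(X\in S)$, and the desired conclusion becomes $\mathbb{E}[g(Y)] \geq \mathbb{P}(Y\in S)$. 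Thus it suffices to prove the statement for functions valued in $(0,1)$, which eliminates $a$ and $b$ from the picture.

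Next, for the first claim I would write each side as a single integral against the relevant density. Since $\mathbb{P}(W\in S) = \mathbb{E}[\mathbf{1}_S(W)]$, we have
\[\mathbb{E}[g(Y)] - \mathbb{P}(Y\in S) = \int \big(g(z) - \mathbf{1}_S(z)\big)\,\mu_Y(z)\,dz,\]
and the analogous expression with $\mu_X$ represents the quantity that the hypothesis guarantees to be nonnegative. The goal is then to show that the $Y$-integral is nonnegative.

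The heart of the argument is a pointwise comparison of the integrand on $S$ and on $S^c$. On $S$ we have $g - \mathbf{1}_S = g-1 \leq 0$ together with $\mu_Y \leq t\,\mu_X$, whence $(g-1)(\mu_Y - t\mu_X)\geq 0$, i.e. $(g-\mathbf{1}_S)\mu_Y \geq t\,(g-\mathbf{1}_S)\mu_X$. On $S^c$ we have $g-\mathbf{1}_S = g \geq 0$ together with $\mu_Y \geq t\,\mu_X$, which gives the same inequality $g(\mu_Y - t\mu_X)\geq 0$. Since $S$ and $S^c$ partition $\mathbb{R}^d$, the comparison $(g-\mathbf{1}_S)\mu_Y \geq t\,(g-\mathbf{1}_S)\mu_X$ holds everywhere, and integrating yields
\[\mathbb{E}[g(Y)] - \mathbb{P}(Y\in S) \;\geq\; t\big(\mathbb{E}[g(X)] - \mathbb{P}(X\in S)\big) \;\geq\; 0,\]
because $t>0$ and the bracket is nonnegative by hypothesis. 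This settles the first claim, and the second follows from the same computation with all the likelihood-ratio inequalities, and hence the final two displayed inequalities, reversed.

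The main obstacle I anticipate is not the algebra but keeping the inequality directions consistent: the sign of $g - \mathbf{1}_S$ flips between $S$ and $S^c$, and this interacts with the direction of the density comparison between $\mu_Y$ and $t\mu_X$ and with the convention $t>0$, so each of the four sign combinations must be checked. A secondary subtlety is the measure-theoretic treatment of the set where $\mu_X(z)=0$: there the ratio $\mu_Y/\mu_X$ should be read as $+\infty$, placing such points in $S^c$ for part~1 (and in $S$ for part~2), after which the pointwise comparison still holds since $\mu_Y \geq 0 = t\mu_X$ there. For the isometric Gaussian smoothing densities used in the paper this issue is vacuous, as both densities are strictly positive everywhere.
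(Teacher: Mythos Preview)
Your proposal is correct and follows essentially the same route as the paper: split the integral $\int (g-\mathbf{1}_S)\mu_Y$ over $S$ and $S^c$, use the sign of $g-\mathbf{1}_S$ together with the density comparison $\mu_Y \lessgtr t\mu_X$ on each piece, and bound by $t$ times the $X$-version. The only cosmetic difference is that you normalize to $g=(h-a)/(b-a)\in(0,1)$ up front, whereas the paper carries $a$ and $b$ through the computation and arrives at the equivalent decomposition $\int_{S^c}(h-a)\mu_Y - \int_S(b-h)\mu_Y$; these differ by the constant factor $b-a$ and are otherwise identical.
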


\begin{proof}
Let $S^c$ be the complement set of $S$.
\begin{align*}
    \mathbb{E}[h(Y)] - (b-a)\mathbb{P}(Y \in S) - a &= \mathbb{E}[h(Y)] - b \mathbb{P}(Y \in S) - a (1 - \mathbb{P}(Y \in S)) \\
    &= \mathbb{E}[h(Y)] - b \mathbb{P}(Y \in S) - a \mathbb{P}(Y \notin S)\\
    &= \int_{\mathbb{R}^d} h(z) \mu_Y(z) dz - b \int_{S} \mu_Y(z) dz - a \int_{S^c} \mu_Y(z) dz\\
    &= \left[ \int_{S^c} h(z) \mu_Y(z) dz + \int_{S} h(z) \mu_Y(z) dz \right] - b \int_{S} \mu_Y(z) dz - a \int_{S^c} \mu_Y(z) dz\\
    &= \int_{S^c} (h(z) - a) \mu_Y(z) dz - \int_{S} (b - h(z)) \mu_Y(z) dz\\
    &\geq t \left[ \int_{S^c} (h(z) - a) \mu_X(z) dz - \int_{S} (b - h(z)) \mu_X(z) dz \right] \tag{since $a < h(z) < b$}\\
    &= t \left[ \int_{\mathbb{R}^d} h(z) \mu_X(z) dz - b \int_{S} \mu_X(z) dz - a \int_{S^c} \mu_X(z) dz \right]\\
    &= t \left[ \mathbb{E}[h(X)] - b \mathbb{P}(X \in S) - a \mathbb{P}(X \notin S) \right]\\
    &= t \left[ \mathbb{E}[h(X)] - b \mathbb{P}(X \in S) - a (1 - \mathbb{P}(X \in S)) \right]\\
    &= t \left[\mathbb{E}[h(X)] - (b-a)\mathbb{P}(X \in S) - a \right] \geq 0
\end{align*}
The second statement can be proven similarly by switching $\geq$ and $\leq$.
\end{proof}
In the first statement of the lemma, set $h$ to $f_i$, $\mu_X$ to $\mathcal{N}(x, \sigma^2 I)$ and $\mu_Y$ to $\mathcal{N}(x', \sigma^2 I)$, and find a $t$, such that, $\mathbb{P}(X \in S) = \underline{p_i}(x)$. Now, since $\mu_X$ and $\mu_Y$ are isometric Gaussians with the same variance,
\[\frac{\mu_Y(z)}{\mu_X(z)} \leq t \iff (x' - x)^T z \leq \beta\]
for some $\beta \in \mathbb{R}$. Therefore, the set $S$ is a half-space defined by a hyper-plane orthogonal to the perturbation $x' - x$. So, if $\norm{x'-x}_2 \leq R$, then $\mathbb{P}(Y \in S) \geq \Phi_\sigma(\Phi_\sigma^{-1}(\underline{p_i}(x)) - R)$.
\begin{align*}
    \bar{f}_i(x') &= \mathbb{E}[f_i(Y)]\\
    &\geq (b - a) \mathbb{P}(Y \in S) + a \tag{from the above lemma}\\
    &\geq (b - a) \Phi_\sigma(\Phi_\sigma^{-1}(\underline{p_i}(x)) - R) + a\\
    & = b \Phi_\sigma ( \Phi_\sigma^{-1} (\underline{p_i}(x)) - R) + a (1 - \Phi_\sigma ( \Phi_\sigma^{-1} (\underline{p_i}(x)) - R))
\end{align*}
The upper bound on $\bar{f}_i(x')$ can be derived similarly by applying the second statement of the above lemma.

\subsection{Alternate proof}
Theorem~\ref{thm:naive_bnd} can also be proved for $\sigma = 1$ using Lemma 2 from \citeauthor{SalmanLRZZBY19} in~\cite{SalmanLRZZBY19}. This lemma states that for any function $g: \mathbb{R}^d \rightarrow (0, 1)$, $\Phi^{-1}(\bar{g})$ is 1-Lipschitz, where $\Phi^{-1}$ is the inverse CDF of the standard Gaussian distribution. Set $g(.)$ to be $\frac{f_i(.) - a}{b-a}$ for an arbitrary class $i$. Then, $\bar{g}(x) = \frac{\bar{f}_i(x) - a}{b-a}$ is upper and lower bounded by $\overline{p_i}(x)$ and $\underline{p_i}(x)$ respectively. Due to the Lipschitz condition, we have,
\begin{align*}
    \Phi^{-1}(\bar{g}(x)) - \Phi^{-1}(\bar{g}(x')) &\leq \norm{x - x'}_2 \leq R \\
    \Phi^{-1}(\bar{g}(x')) &\geq \Phi^{-1}(\bar{g}(x')) - R \geq \Phi^{-1}(\underline{p_i}(x)) - R\\
    \bar{g}(x') &\geq \Phi(\Phi^{-1}(\underline{p_i}(x)) - R)
\end{align*}
Substituting $\bar{g}(x) = \frac{\bar{f}_i(x) - a}{b-a}$ and rearranging terms appropriately gives us the first bound in theorem~\ref{thm:naive_bnd}. The second bound can be derived similarly.

\section{Proof of Lemma \ref{lem:N-P}}
Let $S^c$ be the complement set of $S$.
\begin{align*}
\mathbb{P}(h(Y) \geq s) - \mathbb{P}(Y \in S) &= \int_{\mathbb{R}^d} \mathbf{1} \{ h(z) \geq s\} \mu_Y(z) dz - \int_S \mu_Y(z) dz \\
&= \left[ \int_{S^c} \mathbf{1} \{ h(z) \geq s\} \mu_Y(z) dz + \int_{S} \mathbf{1} \{ h(z) \geq s\} \mu_Y(z) dz \right] - \int_S \mu_Y(z) dz \\
&= \int_{S^c} \mathbf{1} \{ h(z) \geq s\} \mu_Y(z) dz - \int_{S} (1 - \mathbf{1} \{ h(z) \geq s\}) \mu_Y(z) dz\\
&\geq t \left[ \int_{S^c} \mathbf{1} \{ h(z) \geq s\} \mu_X(z) dz - \int_{S} (1 - \mathbf{1} \{ h(z) \geq s\}) \mu_X(z) dz \right] \tag{since $0 \leq \mathbf{1} \{ h(z) \geq s\} \leq 1$}\\
&= t \left[ \int_{\mathbb{R}^d} \mathbf{1} \{ h(z) \geq s\} \mu_X(z) dz - \int_S \mu_X(z) dz \right] \\
&= t \left[ \mathbb{P}(h(X) \geq s) - \mathbb{P}(X \in S) \right] \geq 0
\end{align*}
The second statement of the lemma can be proven similarly by switching $\geq$ and $\leq$.

\section{Additional Experiments}
In section~\ref{subsec:exp_res}, we compared the two methods, using Hoeffding's inequality and Dvoretzky–Kiefer–Wolfowitz inequality to derive the required lower bounds, for the certificates. We repeat the same experiments in figure~\ref{fig:pred_score_050} for $\sigma=0.50$. Then, in figure~\ref{fig:best_baseline}, we show that the CDF-based method (using the DKW inequality) outperforms the baseline approach regardless of how tight a lower-bound for $e_i(x)$ is used in the baseline certificate~(\ref{ineq:naive_lbd}).
We replace $\underline{e_i}(x)$ with the empirical estimate of the expectation $\hat{e_i}(x) = \sum_{j=1}^m f_i(x + \delta_j)/m$, which is an upper bound on $\underline{e_i}(x)$. And since bound~(\ref{ineq:naive_lbd}) is an increasing function of $\underline{e_i}(x)$, any valid lower bound $\underline{e_i}(x)$ on the expectation cannot yield a certified accuracy better than that obtained using $\hat{e_i}(x)$.
We compare our certificate with the best-possible baseline certificate for some of \citeauthor{cohen19}'s ResNet-110 models trained on the CIFAR-10 dataset using the same value of $\alpha$ as in section~\ref{subsec:exp_res}. The baseline mostly stays below the CDF-based method for both types of confidence measures under the noise levels considered.

\begin{figure}[t]
\centering
\begin{subfigure}{.5\textwidth}
  \centering
  \includegraphics[width=\textwidth]{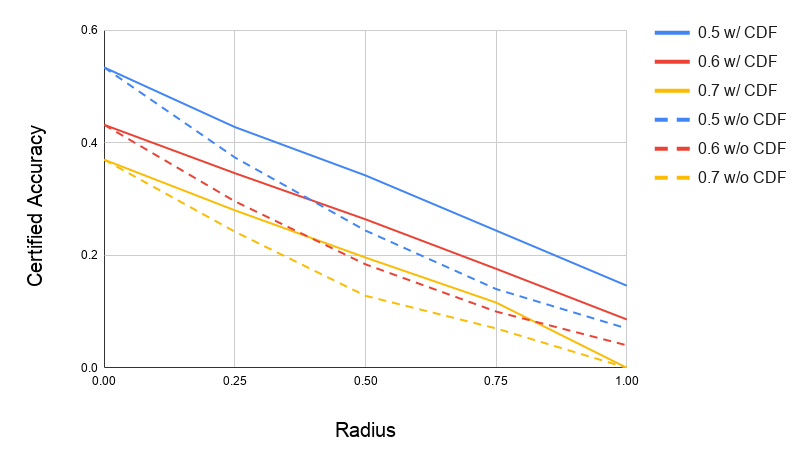}\vspace{-3mm}
  \caption{Average Prediction Score (CIFAR-10)}
\end{subfigure}%
\begin{subfigure}{.5\textwidth}
  \centering
  \includegraphics[width=\textwidth]{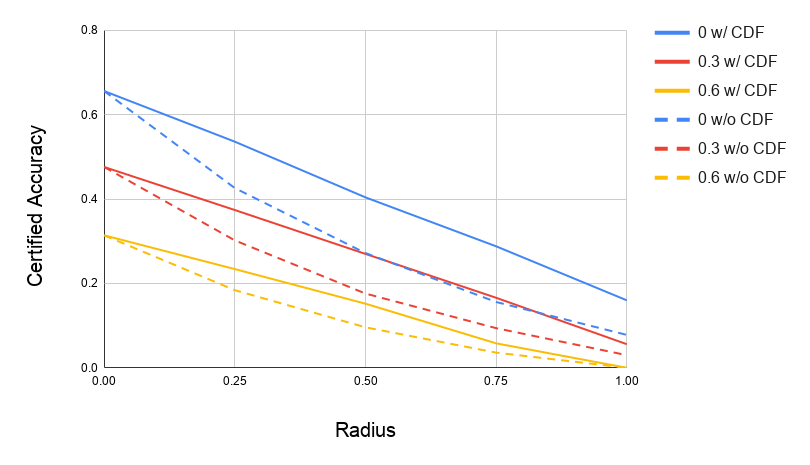}\vspace{-3mm}
  \caption{Margin (CIFAR-10)}
\end{subfigure}

\begin{subfigure}{.5\textwidth}
  \centering
  \includegraphics[width=\textwidth]{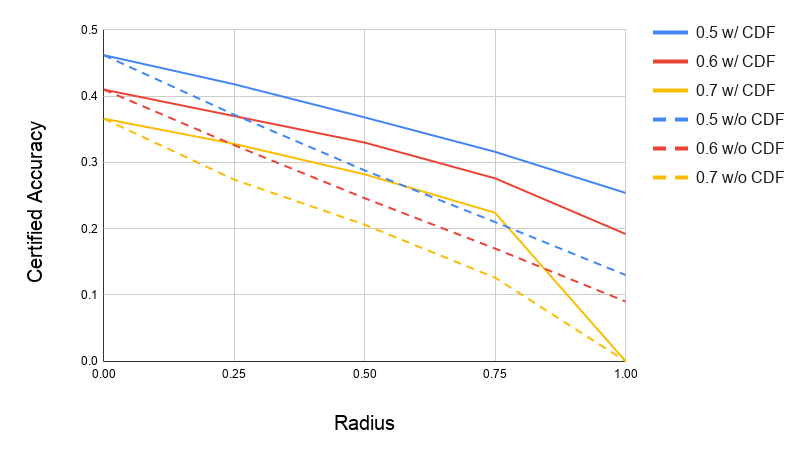}\vspace{-3mm}
  \caption{Average Prediction Score (ImageNet)}
\end{subfigure}%
\begin{subfigure}{.5\textwidth}
  \centering
  \includegraphics[width=\textwidth]{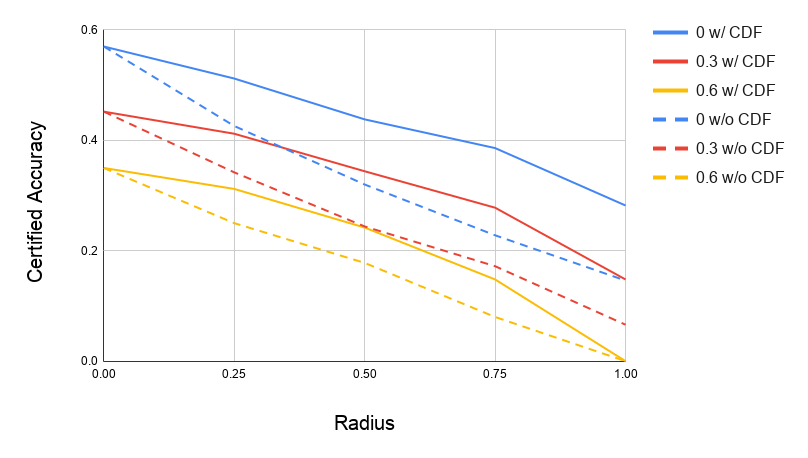}\vspace{-3mm}
  \caption{Margin (ImageNet)}
\end{subfigure}
\caption{Certified accuracy vs. radius (CIFAR-10 \& ImageNet) at different cutoffs for average confidence score with $\sigma=0.50$. Solid and dashed lines represent certificates computed with and without CDF bound respectively.}
\label{fig:pred_score_050}
\end{figure}

\begin{figure}[t]
\centering
\begin{subfigure}{.5\textwidth}
  \centering
  \includegraphics[width=\textwidth]{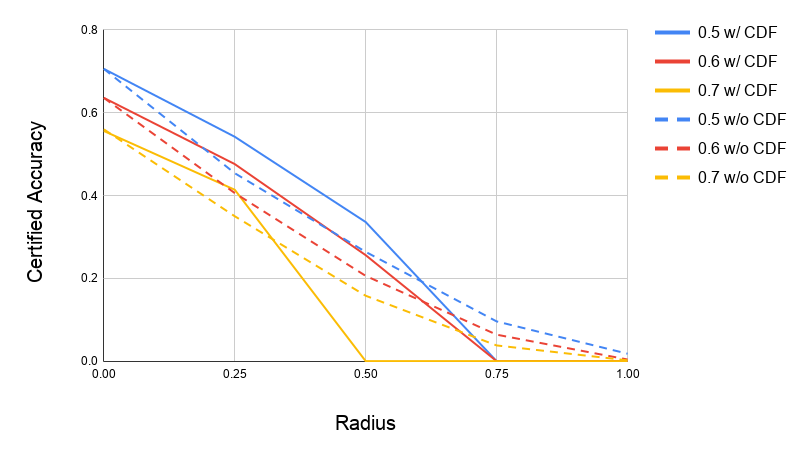}\vspace{-3mm}
  \caption{Average Prediction Score at $\sigma=0.25$}
\end{subfigure}%
\begin{subfigure}{.5\textwidth}
  \centering
  \includegraphics[width=\textwidth]{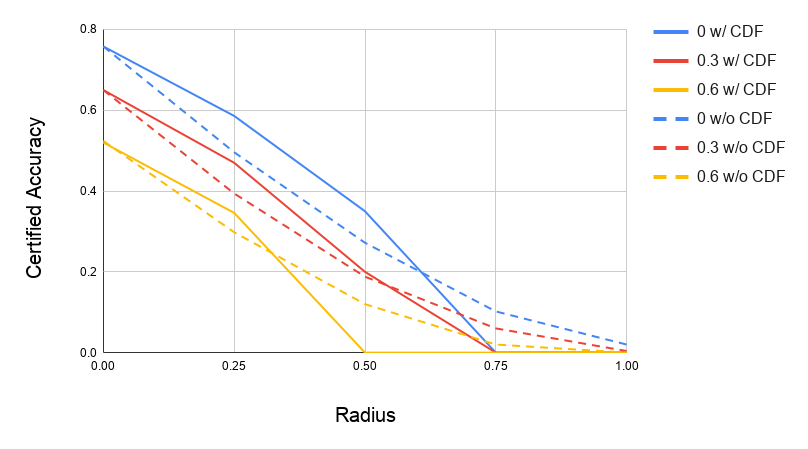}\vspace{-3mm}
  \caption{Margin at $\sigma=0.25$}
\end{subfigure}

\begin{subfigure}{.5\textwidth}
  \centering
  \includegraphics[width=\textwidth]{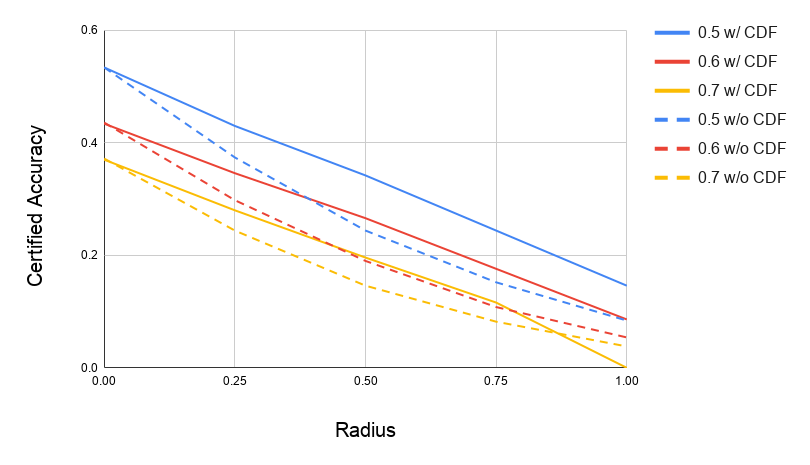}\vspace{-3mm}
  \caption{Average Prediction Score at $\sigma=0.50$}
\end{subfigure}%
\begin{subfigure}{.5\textwidth}
  \centering
  \includegraphics[width=\textwidth]{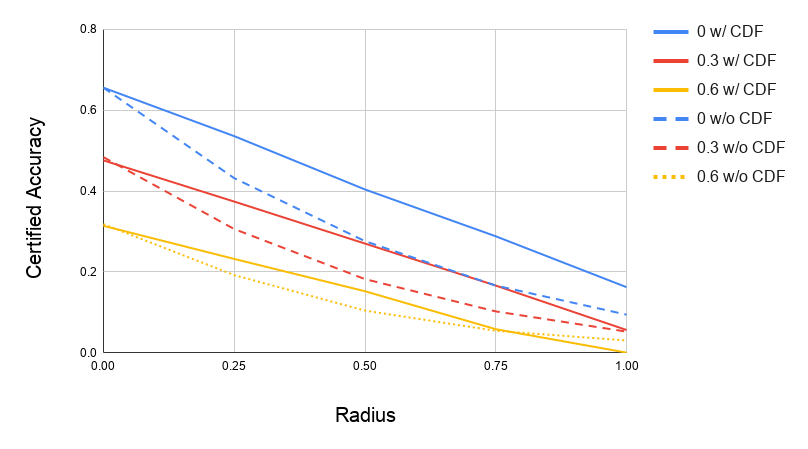}\vspace{-3mm}
  \caption{Margin at $\sigma=0.50$}
\end{subfigure}
\caption{Certified accuracy vs. radius (CIFAR-10 only) at different cutoffs for average confidence score. Solid lines represent certificates computed with the CDF bound and dashed lines represent the best-possible baseline certificate.}
\label{fig:best_baseline}
\end{figure}

\end{document}